\documentclass{article} 
\usepackage{style2027/iclr2027_conference,times}

\usepackage{amsmath,amsfonts,bm}

\def\secref#1{section~\ref{#1}}

\def\eqref#1{equation~\ref{#1}}

\def\Algref#1{Algorithm~\ref{#1}}

\def\1{\bm{1}}

\DeclareMathAlphabet{\mathsfit}{\encodingdefault}{\sfdefault}{m}{sl}
\SetMathAlphabet{\mathsfit}{bold}{\encodingdefault}{\sfdefault}{bx}{n}

\usepackage{graphicx}
\usepackage{wrapfig}
\usepackage{placeins}
\usepackage{enumitem}
\usepackage{xcolor}
\usepackage{booktabs}
\usepackage{float}
\usepackage{amsmath}
\usepackage{amsthm}
\newtheorem{theorem}{Theorem}
\newtheorem{lemma}[theorem]{Lemma}
\newtheorem{definition}[theorem]{Definition}
\usepackage{algorithm}
\usepackage{algpseudocode}


\usepackage{hyperref}
\usepackage{url}

\title{zkLLMPoT: Efficient Zero Knowledge Proof of Training for Large Language Models}

\author{%
Junkai Liang$^{1,3}$, Zhanpeng Guo$^{2}$, Pengfei Wu$^{1,3}$, Qingni Shen$^{1}$, \\
\bf Jiaheng Zhang$^{4}$, Zhonghai Wu$^{1}$, Haiyang Xue$^{3}$, Shengfang Zhai$^{4}$ \\[0.4em]
{\normalfont $^{1}$\,Peking University} \\
{\normalfont $^{2}$\,China Telecom Quantum Information Technology Group Co., Ltd} \\
{\normalfont $^{3}$\,Singapore Management University} \\
{\normalfont $^{4}$\,National University of Singapore}%
}

\iclrfinalcopy 
\begin{document}

\maketitle  

\begin{abstract}
Auditing the claimed outcomes of large language model (LLM) training is challenging when model weights and training data are private, while cryptographically proving the full training process is prohibitively expensive at Transformer scale. We present zkLLMPoT, a zero-knowledge framework that certifies auditor-defined properties of a trained checkpoint through forward evaluation rather than verification of its optimization trajectory. zkLLMPoT includes 2 phases: 1) The trainer fixes the architecture and the model weights are committed. Then the auditor selects challenge sequences, preventing the trainer from modifying the checkpoint in response to the audit data. 2) Then the trainer proves the objective value attained by the committed model on those sequences. This formulation makes the certification cost independent of the number of training iterations, without revealing model weights or requiring access to private training data. We build on sumcheck- and lookup-based arguments to certify Transformer computations, while supporting next-token loss and task-specific audit objectives. Across four model families, operator-level benchmarks yield proving times of 41–59 seconds for 1.1–1.5B-parameter models and 131 seconds at 13B for the covered operators, with verification below half a second at a sequence length of 512.

\end{abstract}

\section{Introduction}

Large language models (LLMs) increasingly serve as critical infrastructure for language understanding, generation, and decision support~\citep{brown2020language}, yet assessing their training outcomes raises pressing \emph{privacy} and \emph{trustworthiness} concerns. Trained models can memorize and leak sensitive examples~\citep{shokri2017membership,carlini2021extracting}, so institutions may withhold training data and model weights from external auditors, leaving the public with only a model card or an API endpoint.
Consequently, when a trainer announces a new LLM, users and regulators often cannot independently determine whether the released checkpoint actually satisfies the capability or policy claims attributed to its training~\citep{waiwitlikhit2024zkaudit,srivastava2024optimistic}.
Without a cryptographic certificate tied to the checkpoint, such claims remain difficult to audit despite their legal, commercial, and safety consequences.

Existing approaches to trustworthy LLM computation leave this gap largely unresolved.
First, a substantial line of security work develops zero-knowledge proofs for \emph{inference} integrity, ranging from CNN and general neural-network predictors~\citep{liu2021zkcnn,weng2021mystique,hao2024scalable,chen2024zkml} to LLM-oriented systems such as zkLLM and zkGPT~\citep{sun2024zkllm,qu2025zkgpt}.
These systems target per-query serving integrity: they certify that a committed model produced a claimed output, rather than providing a checkpoint-level audit against an objective and challenge set chosen by an auditor after the model is committed.
Second, cryptographic proofs of training that do target the learning process typically encode full gradient descent with backpropagation.
Even for non-transformer networks this is already expensive.
For example, Kaizen requires about 15 minutes of proving time per training iteration on a VGG-11 model~\citep{abbaszadeh2024kaizen}, and related certificates such as confidential proofs of differentially private training can take on the order of 100 hours even on CIFAR-10~\citep{shamsabadi2024confidential}.
At LLM scale, naively proving every backward pass is therefore impractical with current techniques.
Third, early practical zkPoT protocols are largely restricted to demonstration-level models such as logistic regression~\citep{garg2023zkpot}, while optimistic replication-based verifiable training still assumes a third-party auditor who can re-execute training~\citep{srivastava2024optimistic,waiwitlikhit2024zkaudit}.
Taken together, prior work either targets per-query inference, proves the training trajectory through an expensive backpropagation circuit, or scales only to demonstration-level learners. Efficient checkpoint-level, commit-then-challenge auditing for LLM training outcomes therefore remains largely open.

In this work, we present \textbf{zkLLMPoT}, a zero-knowledge framework for certifying properties of a checkpoint that are claimed as outcomes of LLM training or fine-tuning.
Figure~\ref{fig:overview} shows the audit workflow.
In the full protocol, zkLLMPoT lets a trainer prove that a committed checkpoint attains a declared objective on auditor-chosen data without exposing the weights or making the private training corpus part of the proved relation. The same interface can express a declared group-fairness metric or a per-example loss used as a memorization and membership-risk signal~\citep{waiwitlikhit2024zkaudit}.
Built on zk-SNARKs, the proof size and verification time are independent of the number of training iterations and do not require re-executing training. Our prototype measures the principal Transformer operators needed by this relation and provides full circuit composition and zero-knowledge blinding.

\begin{figure}[t]
\centering
\includegraphics[width=\linewidth]{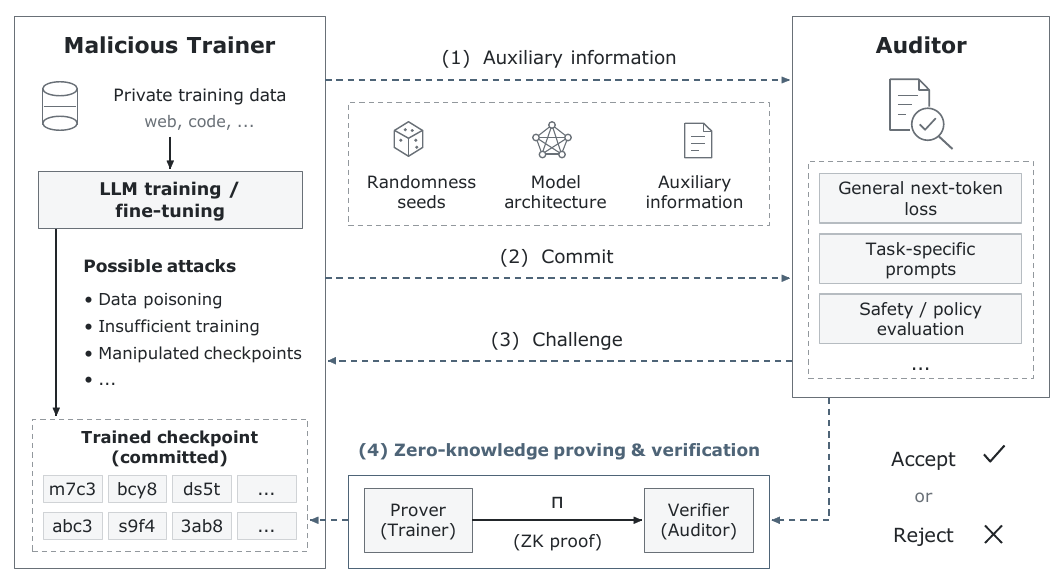}
\caption{Audit workflow of zkLLMPoT. After publishing public auxiliary information, the trainer commits a checkpoint. The auditor then issues challenge sequences under a chosen objective (next-token loss, task prompts, or a safety check). The auditor verifies a zero-knowledge proof and accepts or rejects the claimed outcome without learning the private training data or model weights.}
\label{fig:overview}
\end{figure}

Our protocol attests to properties of a committed checkpoint of a large language model. It can show that (i) the checkpoint improves over a committed reference state on auditor-chosen sequences; (ii) it attains a declared capability on domain-specific tasks; and (iii) the evaluation follows the declared Transformer architecture and objective. The zero-knowledge protocol hides the weights and intermediate activations. Achieving such a certificate raises two challenges. First, proving the training process itself is prohibitively slow: every step adds a forward pass, a backward pass, and an optimizer update to the proved relation, so the proving time grows with the length of the training run. Second, if the evaluation data are known before the checkpoint is fixed, the trainer can overfit the checkpoint to those data, and a good result on them no longer reflects the claimed training outcome.
To make the proof practical, zkLLMPoT evaluates the committed checkpoint through a loss-function-style objective. This is a feasible measure of training outcomes, because the loss a checkpoint attains on held-out data from the target distribution is the standard way to judge whether pretraining or fine-tuning achieved its goal~\citep{gururangan2020dontstop}. To avoid the trainer overfitting the evaluation, we specified a data commitment protocol.
The auditor specifies the objective, the trainer freezes the corresponding circuit and commits the checkpoint, and the auditor then chooses the evaluation sequences. The trainer is not able to modify her weights after seeing the commitment. Together, these techniques let zkLLMPoT generate and verify proofs quickly, with proving times of $41$--$59$\,s for the covered operators on $1.1$--$1.5$B-parameter models and $131$\,s at $13$B, and verification below half a second at sequence length $512$, while the commit-then-challenge order removes the trainer's opportunity to overfit the checkpoint to the auditor's sequences.

In summary, our contributions are:
\begin{itemize}[leftmargin=0.9em]
\item An outcome-attestation formulation that audits properties attributed to LLM training through a loss-function-style evaluation of committed weights, excluding the backward pass and optimizer from the proved relation;
\item A commit-then-challenge protocol in which the trainer binds the circuit and checkpoint before the auditor selects the evaluation sequences, which then enter the statement as public inputs;
\item Operator-level optimizations and fusion for compiling Transformer layers into compact ZK circuits, plus an interface for auditor-specified terminal objectives;
\item A component-based study across four model families, yielding proving times of $41$--$59$\,s for the covered operators at the $1.x$B scale and $131$\,s at $13$B, with verification below half a second.
\end{itemize}

\section{Preliminaries}

In this section, we introduce the properties a zero-knowledge protocol is required to satisfy, and the threat model of our method.
We first recall completeness, soundness, zero knowledge, and succinctness for the evaluation relation in~\secref{sec:prelim-zksnark}.
We then state the two-party threat model in~\secref{sec:prelim-threat}.

\subsection{Zero Knowledge Proof}
\label{sec:prelim-zksnark}

A \emph{zero-knowledge succinct non-interactive argument of knowledge} (zk-SNARK) lets a prover convince an auditor that it knows a witness $w$ for a public statement $x$ of an NP relation $R$, without revealing $w$ and without the auditor re-executing the computation~\citep{goldwasser1989knowledge,bitansky2012succinct,groth2016snark}.
Here $R$ is a set of pairs $(x,w)$, and its associated language $L_R = \{\,x \mid \exists\, w : (x,w)\in R\,\}$ is the set of statements that admit a witness.
We instantiate $R$ with the definition of model training: the public input is the declared architecture $\mathcal{A}$, the objective, the evaluation sequences $S$, a commitment to the released weights $\theta$, and a claimed scalar $v$; the witness is $\theta$ and the intermediate activations, with a reference checkpoint added when the statement compares two model states.
An accepting proof convinces the auditor that $v$ is the value the stated objective, such as the output of a loss function, takes on $f_{\mathcal{A},\theta}(S)$.
We require the following properties for the evaluation relation above.

\begin{itemize}
\item \textbf{Completeness.}
If the committed weights, public sequences, activations, and objective value satisfy the declared forward relation, then an honest prover can produce a proof that an honest auditor accepts.

\item \textbf{Soundness.}
For every (efficient) malicious prover $\mathcal{P}^*$ and every statement $x \notin L_R$, the verifier accepts $\mathcal{P}^*$'s proof with only negligible probability, which means a trainer cannot substitute uncommitted weights, change the declared architecture or objective, or claim a value $v$ that the committed $\theta$ does not attain on the public $S$.

\item \textbf{Zero knowledge.}
The proof reveals nothing about the witness beyond the truth of the statement~\citep{goldwasser1989knowledge}.
This means the auditor learns that the committed $\theta$ satisfies the evaluation relation but learns no additional information about the weights or intermediate activations.

\item \textbf{Succinctness.}
Proof size and verification time grow more slowly than the computation represented by the relation~\citep{bitansky2012succinct,groth2016snark}.
Succinctness makes an outcome certificate transferable because downstream users can verify it without matching the prover's GPU budget.
\end{itemize}

\subsection{Threat Model}
\label{sec:prelim-threat}

We use the two-party layout of zkPoT~\citep{garg2023zkpot}.
A trainer holds a checkpoint $\theta$ and any private training corpus as intellectual property.
An auditor knows the public architecture $\mathcal{A}$ and the objective $\mathcal{L}$, and later chooses the evaluation sequences $S$.
Both parties are computationally bounded.

\begin{itemize}
\item \textbf{Malicious trainer.}
The trainer need not follow training or the protocol.
Its goal is to make the auditor accept a false statement about the committed checkpoint, for example by substituting uncommitted or weaker weights, changing $\mathcal{A}$ or $\mathcal{L}$, adapting $\theta$ after seeing $S$, or claiming a value $v$ that the committed $\theta$ does not attain on the public $S$.
This matches the zkPoT requirement that a false training claim be rejected~\citep{garg2023zkpot}.
Soundness of~\secref{sec:prelim-zksnark} is required against this trainer.

\item \textbf{Semi-honest auditor.}
As in zkPoT~\citep{garg2023zkpot}, the auditor follows the verification procedure and reports accept or reject correctly, but may try to recover $\theta$, intermediate activations, or the private training corpus from the transcript.
Zero knowledge is required against this auditor, so the proof reveals only the truth of $v=\mathcal{L}(f_{\mathcal{A},\theta}(S))$.
\end{itemize}

Unlike Garg et al.'s zkPoT statement, which asserts that a committed model was obtained by running a declared learning algorithm on a committed dataset~\citep{garg2023zkpot}, our relation encodes only the committed $\theta$ satisfy the loss function defined by the auditor.
The protocol therefore binds the published checkpoint to the auditor's sequences and loss function choices and does not bind full backpropagation.

\section{zkLLMPoT}
\label{sec:llmpot}

In this section, we detail zkLLMPoT, our protocol for certifying an auditor-chosen evaluation of a committed LLM checkpoint. We first formulate a unified decoder-only training model and the outcome claim in~\secref{sec:prelim-training}. We then describe the protocol components, including polynomial commitments, linear and lookup gadgets, the commit-then-challenge order, and auditor-specified objectives, in~\secref{sec:llmpot-protocol}. We finally combine these components into the end-to-end protocol and state its soundness in~\secref{sec:llmpot-together}.

\subsection{Formulation}
\label{sec:prelim-training}
Contemporary language models such as OPT~\citep{zhang2022opt}, Llama~\citep{touvron2023llama}, Qwen2.5~\citep{bai2023qwen}, and DeepSeek-Coder~\citep{deepseek2024v2} are decoder-only Transformers.
They differ in normalization, positional encoding, attention layout, and feed-forward activation, but they share one computational skeleton and one training objective.
We therefore treat them as instances of a single architecture family $\mathcal{A}$, rather than writing a separate training model for each.

\textbf{Forward computation.}
Fix a vocabulary $\mathcal{V}$ and a token sequence $x=(x_1,\ldots,x_n)\in\mathcal{V}^n$.
A model in $\mathcal{A}$ is specified by weights $\theta$ together with a small operator choice $(\mathrm{Norm},\mathrm{Pos},\mathrm{Attn},\Phi)$, and computes
\begin{equation}
\label{eq:llm-forward}
  h^{(0)}=\mathrm{Embed}_\theta(x),\qquad
  h^{(\ell+1)}=\mathrm{Block}_\theta^{(\ell)}(h^{(\ell)}),\qquad
  z=h^{(L)}W_{\mathrm{out}}.
\end{equation}
Every block uses the pre-norm residual form common to all of the above families,
\begin{equation}
\label{eq:llm-block}
  u = h+\mathrm{Attn}_\theta\bigl(\mathrm{Norm}(h);\mathrm{Pos}\bigr),\qquad
  h' = u+\mathrm{FFN}_\theta^{\Phi}\bigl(\mathrm{Norm}(u)\bigr).
\end{equation}
Here $\mathrm{Norm}$ is LayerNorm (OPT) or RMSNorm (Llama, Qwen2.5, DeepSeek-Coder);
$\mathrm{Pos}$ is either an additive absolute embedding or rotary embedding (RoPE) inside attention;
$\mathrm{Attn}$ is multi-head, grouped-query, or multi-head latent attention;
and $\Phi$ specializes the feed-forward network (FFN) to ReLU (OPT) or SwiGLU (Llama, Qwen2.5, DeepSeek-Coder).
Row-wise softmax turns the logits $z$ into next-token distributions $p_\theta(\cdot\mid x_{\le i})=\mathrm{softmax}(z_i)$.

\textbf{Training objective and updates.}
Pre-training and supervised fine-tuning of all these models minimize a masked next-token negative log-likelihood.
Given a sequence $x$ and a binary mask $m\in\{0,1\}^n$ that marks which tokens are predicted,
\begin{equation}
\label{eq:llm-nll}
  \mathcal{L}_{\mathrm{LM}}(\theta;x,m)
  \;=\;
  -\sum_{i=1}^{n-1} m_{i+1}\,\log p_\theta(x_{i+1}\mid x_{\le i}).
\end{equation}
Causal pre-training takes $m\equiv 1$ (except the first token); instruction or SFT training zeros $m$ on the prompt and keeps it on the response.
The same loss therefore covers OPT-style continued pre-training and Llama~/~Qwen2.5~/~DeepSeek-Coder-style supervised fine-tuning.
A training run is the first-order iteration
$\theta_{t+1}=\mathsf{Opt}(\theta_t,\nabla_\theta\mathcal{L}_{\mathrm{LM}}(\theta_t;x_t,m_t),\eta_t,s_t)$,
starting from an initialization $\theta_0$ and releasing a checkpoint $\theta_T$.
Here $\mathsf{Opt}$ is a standard adaptive optimizer (e.g., AdamW) with learning rate $\eta_t$ and state $s_t$.
Preference alignment (PPO, DPO, GRPO) only replaces $\mathcal{L}_{\mathrm{LM}}$ by another differentiable scalar on one or two forwards of the same map~\eqref{eq:llm-forward}; the Transformer itself is unchanged.

Our protocol zkLLMPoT confidentially and efficiently certifies an evaluation outcome attributed to training or fine-tuning an open decoder-only LLM, without replaying $\mathsf{Opt}$ or inspecting $\theta$.
zkLLMPoT compiles this model into a zk-SNARK certificate).
Writing $f_{\mathcal{A},\theta}$ for the forward map and $\mathcal{L}$ for a scalar objective, the claim is
\begin{equation}
\label{eq:llmpot-claim}
  v \;=\; \mathcal{L}\bigl(f_{\mathcal{A},\theta}(S)\bigr).
\end{equation}
The trainer and auditor agree on $\mathcal{A}$, and the protocol proves that the committed weights $\theta$ and auditor-chosen sequences $S$ produce the claimed value $v$.
zkLLMPoT has four components: polynomial commitments that bind the witness tables, constraint gadgets for the Transformer forward pass, a commit-then-challenge order for auditor-chosen data, and an auditor-chosen scalar objective.
We describe each component first and then combine them into the end-to-end protocol.

\begin{algorithm}[t]
\caption{zkLLMPoT main protocol}
\label{alg:main}
\small
\begin{algorithmic}[1]
\Require Architecture $\mathcal{A}$; commitment scheme $\{\mathsf{Commit},\mathsf{Open}\}$ over $\mathbb{F}_p$.
\Ensure The auditor accepts or rejects $v=\mathcal{L}\bigl(f_{\mathcal{A},\theta}(S)\bigr)$ for the released $\theta$.
\State The auditor fixes $\mathcal{L}$ and an acceptance band $[\underline{v},\overline{v}]$. The trainer compiles $(\mathcal{A},\mathcal{L})$ into $C_{\mathcal{A},\mathcal{L}}$.
\State The trainer sends $[[\theta]]\gets\mathsf{Commit}(\theta)$, binding the weights before the auditor's challenge set $S$ is revealed.
\State The auditor chooses $S$ and sends it to the trainer.
\State The trainer evaluates $v\gets C_{\mathcal{A},\mathcal{L}}(\theta,S)$ and sends $[[v]]$, together with commitments $[[H]]$ to the intermediate activations $h^{(0)},\dots,h^{(L)}$ of~\eqref{eq:llm-forward} that the layer arguments open. $S$ remains a public input.
\State For $\ell=L,\dots,1$ the trainer reduces the output claim $h^{(\ell)}(u_\ell)=y_\ell$ through the gadget of $\mathrm{Block}^{(\ell)}$ in~\eqref{eq:llm-block} to an input claim $h^{(\ell-1)}(u_{\ell-1})=y_{\ell-1}$ and a weight claim $\theta_\ell(w_\ell)=c_\ell$, sending each sumcheck transcript.
\State The auditor replays every reduction and checks the chain: the input claim $\bigl(u_{\ell-1},y_{\ell-1}\bigr)$ emitted at layer $\ell$ is the output claim it verifies at layer $\ell-1$.
\State The auditor checks $\mathsf{Open}$ for $[[\theta]]$ and $[[H]]$ at the challenge points, evaluates leftover claims on the public table $S$ itself, and recomputes each public table $T\colon\{0,1\}^{n}\to\mathbb{F}_p$ of the nonlinear gadgets instead of accepting an opening of $T$.
\State The auditor accepts iff every check passes and $v\in[\underline{v},\overline{v}]$.
\end{algorithmic}
\end{algorithm}
\subsection{Protocol Components}
\label{sec:llmpot-protocol}

\textbf{Commitment design.}
We compile~\eqref{eq:llm-forward}--\eqref{eq:llm-nll} into polynomial constraints over $\mathbb{F}_p$, giving the public circuit $C_{\mathcal{A},\mathcal{L}}$ whose description or digest is fixed before the challenge rather than committed with the witness.
The witness fills that circuit: $W=(\theta,H,v)$, the weights, the layer activations $H=(h^{(0)},\ldots,h^{(L)})$, and the claimed scalar.
The sequences $S$ are a public input: both parties know them after the auditor sends them, and leftover claims about $S$ are evaluated on that public table.
The trainer sends $[[\theta]]$, $[[H]]$, and $[[v]]$ under a polynomial commitment scheme~\citep{thaler2022proofs}.
Each tensor is a length-$2^n$ table over $\mathbb{F}_p$ identified with its unique multilinear extension, so $[[T]]$ commits to that polynomial rather than to an unstructured string, and proving work is linear in the table size.
Weight roots are computed once per checkpoint and reused; activation roots are computed per audit.
The polynomial commitment scheme is detailed in~\secref{app:zk-pcs}, and the concrete definition of each operator and its constraint gadget is given in~\secref{app:zk-ops}.

\textbf{Constraint design over $\mathbb{F}_p$.}
Tensors in~\eqref{eq:llm-forward} through~\eqref{eq:llm-nll} become tables $T\colon\{0,1\}^n\to\mathbb{F}_p$ by signed fixed-point quantization at a public scale $s$, reduction modulo $p$, and padding to length $2^n$.
We then compile $f_{\mathcal{A},\theta}$ by the standard split of a neural network into \emph{linear} and \emph{nonlinear} layers~\citep{liu2021zkcnn,weng2021mystique}, giving each class a constraint gadget.

The embedding selection, the projections $W_q,W_k,W_v$, the attention products $QK^{\top}$ and $PV$, the FFN maps, $W_{\mathrm{out}}$, and the residual additions in~\eqref{eq:llm-block} admit low-degree constraints over $\mathbb{F}_p$.
We prove the matrix products with hypercube inner-product sumchecks and use direct algebraic constraints for additions and pointwise products, all against committed weight and activation tables.
The functions $\mathrm{Norm}$, $\Phi$, and row-wise softmax also require nonlinear operations such as inverse square roots and exponentials.
We combine low-degree constraints with lookups into public tables for these functions; the tables belong to $C_{\mathcal{A},\mathcal{L}}$ and are not witness commitments.
Copy constraints identify each producer table with the next consumer along~\eqref{eq:llm-block}, so the gadgets compose into one circuit.

We compose the embedding, $L$ residual blocks, output head, and $\mathcal{L}$ gadgets to obtain $C_{\mathcal{A},\mathcal{L}}$.
A different $(\mathrm{Norm},\mathrm{Pos},\mathrm{Attn},\Phi)$ or a different scalar $\mathcal{L}$ only swaps the gadget at that place in the chain.
We prove on an integer-quantized copy of the weights, which avoids encoding floating-point arithmetic as constraints and thereby reduces that overhead.

\textbf{Data-related protocol and commitments.}
The order of the interaction prevents the trainer from adapting the checkpoint to the evaluation set.
The circuit \(C_{\mathcal{A},\mathcal{L}}\) and the binding commitment \([[\theta]]\) are fixed before the auditor reveals \(S\), so the trainer cannot replace them with a circuit or checkpoint tailored to those sequences.
The sequences \(S\) enter the statement as a public input, so leftover embedding claims are checked against the auditor's \(S\) rather than against a committed table.
Consequently, soundness forces the accepted value to be \(v=\mathcal{L}(f_{\mathcal{A},\theta}(S))\) for the already committed \(\theta\) on that public \(S\), rather than for an adaptively chosen \(\theta'(S)\) or a substituted \(S'\).

\textbf{Custom loss functions.}
The auditor chooses the scalar objective \(\mathcal{L}\) in~\eqref{eq:llmpot-claim}, while the committed \(\theta\) and the forward map \(f_{\mathcal{A},\theta}\) remain unchanged.
Only the final objective gadget of \(C_{\mathcal{A},\mathcal{L}}\) changes, and both that gadget and \(\theta\) are fixed before the auditor selects \(S\).
The default is next-token NLL~\eqref{eq:llm-nll} (or the same sum with a prompt mask for SFT); the opened \(v\) can be turned into perplexity \(\mathrm{PPL}=\exp(v/|m|)\) in the clear.
On a labeled set the auditor may instead use \(\mathcal{L}_{\mathrm{CE}}=-\sum\log p_{\theta}(y\mid x)\).
A fairness check is the gap \(v=\mathcal{L}(\theta;S_0)-\mathcal{L}(\theta;S_1)\) between two slices~\citep{waiwitlikhit2024zkaudit}.
A membership check is the NLL of one challenge sequence \(x^{\star}\), with a small \(v\) being the usual memorization signal~\citep{shokri2017membership,carlini2021extracting}.
A safety check uses the same CE on a refusal or toxicity slice, or a public score \(v=\sum_i\sigma(z_i)\) on the logits.
In every case the trainer still proves \(v=\mathcal{L}(f_{\mathcal{A},\theta}(S))\); only the public \(\mathcal{L}\) changes.

\subsection{Putting It All Together}
\label{sec:llmpot-together}

\Algref{alg:main} combines the four components into one end-to-end protocol.
The auditor fixes the objective, the trainer binds the circuit and released weights, and only then does the auditor choose the evaluation sequences.
The trainer evaluates the committed checkpoint and reduces the resulting operator claims through the corresponding gadgets; the auditor checks that those claims chain into one forward evaluation and that the final value lies in the acceptance band.
The protocol above focuses on the two-party interaction; the lookup arguments and the linear-layer operator design details are given in~\secref{app:zk-ops}.

\begin{theorem}[Soundness of \Algref{alg:main}]
Let $\theta$ be the checkpoint committed in Step~2 and $S$ the sequences the auditor draws afterwards. A prover that makes the auditor accept a value $v\neq\mathcal{L}(f_{\mathcal{A},\theta}(S))$ succeeds with probability at most
\begin{equation}
  \varepsilon \;\le\; \sum_{a\in\mathcal{C}_{\mathcal{A},\mathcal{L}}}\varepsilon_a
  \;+\; \varepsilon_{\mathrm{ipa}}^{\mathrm{tot}}
  \;+\; \mathrm{Adv}^{\mathrm{DL}}_{\mathbb{G}}(t) ,
\end{equation}
where $\varepsilon_a$ is the sumcheck error of a linear gadget or the lookup error of a nonlinear gadget, and $\varepsilon_{\mathrm{ipa}}^{\mathrm{tot}}$ is the aggregate soundness error of all inner-product opening arguments.
\end{theorem}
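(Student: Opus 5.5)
The plan is a hybrid, event-decomposition argument followed by a union bound. Fix a prover $\mathcal{P}^*$ that makes the auditor accept some $v\neq\mathcal{L}(f_{\mathcal{A},\theta}(S))$. The bound splits the failure into three kinds of bad event, one per term of the bound.
\begin{itemize}
\item $E_{\mathrm{bind}}$: the prover opens $[[\theta]]$ or $[[H]]$ at some challenge point to a value other than the evaluation of the committed multilinear extension.
\item $E_{\mathrm{ipa}}$: some inner-product opening argument accepts a false evaluation claim.
\item $E_a$, one for each gadget $a\in\mathcal{C}_{\mathcal{A},\mathcal{L}}$: the sumcheck or lookup of gadget $a$ accepts although its output claim is false while its emitted input and weight claims are true.
\end{itemize}
Because the commitment $[[\theta]]$ is sent in Step~2, before $S$ is chosen in Step~3, the committed polynomial $\tilde\theta$ is a fixed function of the prover's state prior to $S$. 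Binding therefore pins a unique $\theta$ independent of $S$. This is the formal sense in which adaptive $\theta'(S)$ is excluded.

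The core step is a backward induction along the chain checked in Steps~5--6, carried out on the complement of all bad events. At the top, the claim on $h^{(L)}$, and hence on the objective gadget that outputs $[[v]]$, is false, because $v$ differs from the honest value. Since $v$ is opened and compared in the clear, this falsity is a false claim about the multilinear extension of the honest output table at some point. At each layer $\ell$, the gadget soundness $\varepsilon_a$ guarantees the following outside $E_a$: if the output claim $h^{(\ell)}(u_\ell)=y_\ell$ is false, then at least one emitted claim is false, either the input claim $h^{(\ell-1)}(u_{\ell-1})=y_{\ell-1}$ or the weight claim $\theta_\ell(w_\ell)=c_\ell$.

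A false weight claim is caught in Step~7 unless $E_{\mathrm{ipa}}$ or $E_{\mathrm{bind}}$ occurs. A false input claim propagates down the chain. At $\ell=0$, the embedding claim is evaluated directly on the public table $S$ and is caught with certainty. Any claim about a nonlinear lookup table is likewise checked by recomputing $T$ rather than by opening it, so no error term arises there. Hence acceptance implies $E_{\mathrm{bind}}\cup E_{\mathrm{ipa}}\cup\bigcup_a E_a$. A union bound then gives $\varepsilon\le\sum_a\varepsilon_a+\varepsilon^{\mathrm{tot}}_{\mathrm{ipa}}+\Pr[E_{\mathrm{bind}}]$.

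It remains to bound $\Pr[E_{\mathrm{bind}}]$. I would use the standard reduction for Pedersen-style vector commitments, as in the appendix PCS: from two distinct openings of one commitment one extracts a nontrivial discrete-log relation among the generators. This gives $\Pr[E_{\mathrm{bind}}]\le\mathrm{Adv}^{\mathrm{DL}}_{\mathbb{G}}(t)$, where $t$ is the running time of $\mathcal{P}^*$ plus the overhead of the reduction. The term $\varepsilon^{\mathrm{tot}}_{\mathrm{ipa}}$ is taken as the sum of the per-opening errors, for example $O(\log N/|\mathbb{F}_p|)$ each. The per-gadget $\varepsilon_a$ are taken from the Schwartz--Zippel bounds $d\cdot n/|\mathbb{F}_p|$ for sumcheck and the logUp/lookup error, both stated with the operators in the appendix.

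The main obstacle is the inductive step at gadgets that emit several claims, or that require copy constraints and a random combination of multiple claims into one. There, falsity of the output must provably imply falsity of some tracked claim, and the random linear combination adds its own small error, which I would fold into $\varepsilon_a$. A second subtlety is quantization. Soundness is relative to the quantized circuit $C_{\mathcal{A},\mathcal{L}}$, so I would read $\mathcal{L}(f_{\mathcal{A},\theta}(S))$ as the fixed-point evaluation over $\mathbb{F}_p$. I would also argue that the public lookup tables and the range checks make every nonlinear relation functional, so that this value is unique.
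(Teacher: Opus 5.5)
Your proposal is correct and follows essentially the same route as the paper's proof. The paper also walks backward from the false top claim. At each gadget, either its argument accepts a false claim (charged to $\varepsilon_a$), or the Step~6 chaining check passes the falsity to a determined input or weight claim. The walk ends either at the public $S$, which is checked directly, or at a committed table, which is charged to the IPA error and discrete-log binding; a union bound then gives the stated sum. The caveats you flag (gadgets emitting several claims or using a random combination, and the fixed-point reading of the value) are not treated explicitly in the paper's proof, and folding them into $\varepsilon_a$ as you propose is consistent with the stated bound.
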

The theorem bounds the probability that the auditor accepts a value other than the loss of the committed checkpoint on the sequences chosen afterwards.
In practice, an accepting transcript means that \(v=\mathcal{L}(f_{\mathcal{A},\theta}(S))\) for the \(\theta\) fixed in Step~2, except with negligible probability.
The bound comes from three sources: the sumcheck or lookup error of each gadget, the aggregate error of the inner-product openings, and the discrete-log binding of the commitment.
A numerical evaluation of the bound shows approximately \(127\) bits of security for the four \(1.x\)B checkpoints at \(T{=}512\), under a conservative union bound in which the statistical error stays below \(2^{-209}\) and each computational term is about \(2^{-128}\).
The detailed proof and this calculation are given in~\secref{app:zk-soundness}.

\section{Experiments}

In the experimental stage we consider two dimensions.
\textbf{Effectiveness}: If verification completes as specified given an honest two-party end-to-end protocol.
\textbf{Efficiency} measures the cost incurred by both parties.
We benchmark the implemented operators in CUDA with a CPU-side verifier and aggregate those measurements according to each model's exact scale and operator counts.

\subsection{Experimental Setup}
\label{sec:eval-setup}

\textbf{Workloads and model scales.}
The efficiency benchmarks use batch size one, sequence length $T{=}512$, and synthetic fixed-point tensors at the dimensions of the public model configurations.
The scaling study covers fourteen configurations from four families: OPT~\citep{zhang2022opt}, Llama~\citep{touvron2023llama}, Qwen2.5~\citep{bai2023qwen}, and DeepSeek-Coder~\citep{deepseek2024v2}. Their nominal sizes span $0.125$B to $13$B parameters.

\textbf{Baselines.}
We compare against prior zero-knowledge proofs of training (zkPoT) and confidential DP-SGD certification.
Garg et al.'s logistic-regression zkPoT~\citep{garg2023zkpot} is the only one of these with a public implementation, and we re-execute it on our own hardware.
Public implementations of Kaizen, ZKAudit, and Confidential-DPproof~\citep{shamsabadi2024confidential} are unavailable, so we use the costs reported in the corresponding papers and mark them as estimates.

\textbf{Hardware and software.}
All operator measurements used a dedicated NVIDIA A100-PCIE (40\,GB) server with an Intel Xeon CPU (80 threads), 629\,GB of RAM, Ubuntu~22.04, and CUDA~12.4. The mcl verifier ran on the same server's CPU without using the GPU.
The GPU was not shared during the reported measurements. Contention can dominate protocol time, and we observed slowdowns of up to $70{\times}$ on the same configuration when other tenants were active.

\textbf{Metrics.}
For each configuration we report the proving time, verification time, and proof size of the covered operators. The proving times are aggregated from per-operator benchmarks measured on the A100, as described above.
We report commitment time separately from proving time. The commitment to released weights is paid once when the model is published and can be reused across later audits, while commitments to activations are generated for each audit.

\subsection{Implementation}

The prototype contains a CUDA/C++ prover and a separate C++ verifier.
The prover reuses the BLS12-381 field and curve kernels of zkLLM~\citep{sun2024zkllm} for GPU arithmetic. On top of them, we implement Pedersen commitments in a Hyrax-style layout, sumcheck arguments for linear operators, and a logup-style lookup argument for nonlinear functions.
The verifier is a separate C++ binary that links mcl for BLS12-381 arithmetic and contains no CUDA.
The protocol supports cross-entropy and custom terminal objectives, and the fairness, membership, and safety audits reuse the same circuit through the objective interface.
Model dimensions and operator counts come from the public configurations, while benchmark inputs are synthetic fixed-point integers.

\subsection{Main Results}

Table~\ref{tab:main} compares zkLLMPoT with prior training-oriented ZK systems under each paper's own model and dataset.
Each row names the system and the model it was evaluated on, then reports the parameter count and four costs.
\textit{Commit} is the time to bind the released weights; when a cell says ``in prove'', that cost is folded into proving.
\textit{Prove} is the time to produce the certificate, \textit{Verify} is the auditor's checking time, and \textit{Proof} is the size of the transferable object.
Kaizen and ZKAudit quote these costs per training iteration, so a fine-tuning run multiplies them by the step count: Kaizen needs $882$\,s per iteration on VGG-11, and DPproof reports $100$ hours for DP-SGD on CIFAR-10 features.
DPproof is interactive, so its auditor stays online throughout and produces no proof object.
The zkLLMPoT rows are one-shot results at $T{=}512$ on $1.1$--$1.5$B checkpoints: commitment is $138$--$206$\,s, proving $41$--$59$\,s, verification below half a second, and proofs about $17$--$20$\,MiB.
The relation contains a single forward evaluation, so the cost does not grow with the number of training steps, and the evaluated checkpoints have two orders of magnitude more parameters than the largest prior system in the table.
zkLLMPoT is therefore more efficient as a training-outcome certificate: it certifies a much larger model in less time than prior systems spend on one iteration of a much smaller learner.

\begin{table}[t]
\caption{Comparison with prior proof-of-training systems~\citep{garg2023zkpot,abbaszadeh2024kaizen,waiwitlikhit2024zkaudit,shamsabadi2024confidential}, using the costs reported in the corresponding papers. Kaizen and ZKAudit charge per training iteration; DPproof is interactive, so its auditor is online throughout and produces no proof object. ``in prove'' folds commitment into proving. Our rows are at sequence length $T{=}512$.}
\label{tab:main}
\centering
\small
\resizebox{\linewidth}{!}{%
\begin{tabular}{llrrrrr}
\toprule
System & Model & Params & Commit & Prove & Verify & Proof \\
\midrule
zkPoT & logistic regression & $1$K & $3690$\,s & $600$\,s & $30$\,s & $350$\,MB \\
Kaizen & VGG-11 & $10.1$M & $218$\,s & $882$\,s & $130$\,ms & $1.63$\,MB \\
ZKAudit & MobileNet v2 & $3.5$M & in prove & $328$\,s & $6.0$\,ms & $9.03$\,kB \\
DPproof & logistic regression & $41$K & in prove & $100$\,h & $100$\,h & N/A \\
\midrule
zkLLMPoT (ours) & OPT-1.3B & $1.3$B & $175$\,s & $58.8$\,s & $428$\,ms & $17.0$\,MiB \\
zkLLMPoT (ours) & Llama-1.1B & $1.1$B & $138$\,s & $53.2$\,s & $423$\,ms & $17.3$\,MiB \\
zkLLMPoT (ours) & Qwen2.5-1.5B & $1.5$B & $206$\,s & $42.4$\,s & $416$\,ms & $19.9$\,MiB \\
zkLLMPoT (ours) & DeepSeek-Coder-1.3B & $1.3$B & $171$\,s & $41.0$\,s & $406$\,ms & $16.9$\,MiB \\
\bottomrule
\end{tabular}}
\end{table}

\subsection{Operator Efficiency}

\begin{wrapfigure}{r}{0.55\columnwidth}
\vspace{-6pt}
\centering
\includegraphics[width=\linewidth]{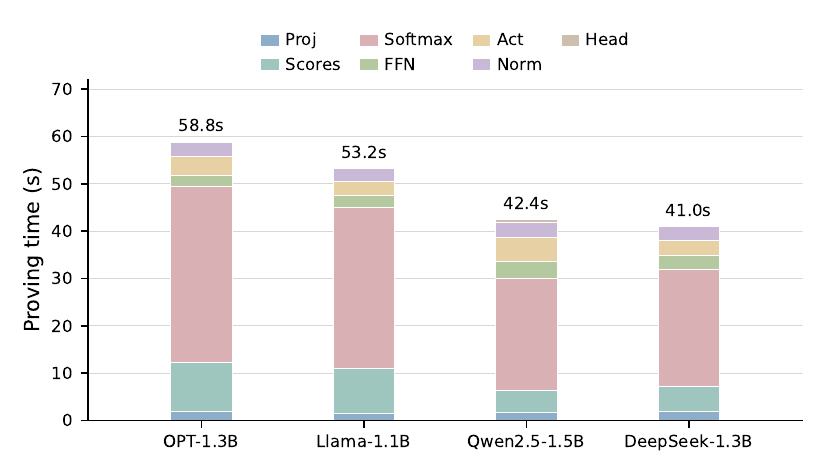}
\caption{Per-operator proving time at $T{=}512$ for the four $1.x$B checkpoints of Table~\ref{tab:main}.}
\label{fig:operators}
\end{wrapfigure}
Figure~\ref{fig:operators} decomposes the Table~\ref{tab:main} proving times into the covered gadgets at $T{=}512$.
Fused sumchecks for the linear projections, FFN maps, and output head remain inexpensive, $4.1$--$6.0$\,s in total on every $1.x$B checkpoint.
Normalization costs $2.6$--$3.1$\,s and the elementwise activation $3.0$--$5.0$\,s, both lookups over tensors of width $T$.
Attention works on the $\mathrm{heads}\times T\times T$ grid, so it is heavier: score matmuls range from $4.6$\,s on Qwen2.5-1.5B ($12$ heads) to $10.3$\,s on OPT-1.3B ($32$ heads), and softmax is the largest term, $56$--$64\%$ of proving time.
Composing these gadgets into one forward certificate is what yields the $41$--$59$\,s totals of Table~\ref{tab:main}: each operator class is efficient at this scale, and assembling them already reaches practical one-shot proving performance.

\subsection{Generalization to Broader Model Scales}

Figure~\ref{fig:scaling} shows the efficiency of zkLLMPoT across four model families spanning $0.125$B to $13$B parameters.
We aggregate the dense-weight commitment and covered proving workload for a $512$-token challenge as a function of model size.
Commitment is approximately linear in the number of committed dense weights. Across these configurations it corresponds to between $124$ and $142$ seconds per nominal billion parameters, with modest variation from the tensor mix in each architecture.
The covered proving workload grows more slowly than model size: increasing the nominal parameter count by $104{\times}$ increases the proving time by $8.3{\times}$, from $15.7$ to $130.6$ seconds.
Verification stays under half a second throughout.
These results indicate favorable scaling for the covered operators.

\begin{figure}[t]
\centering
\includegraphics[width=0.49\linewidth]{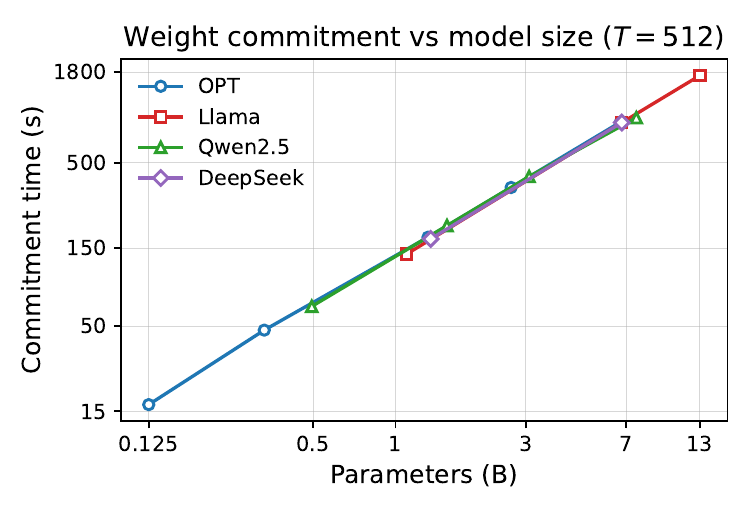}
\hfill
\includegraphics[width=0.49\linewidth]{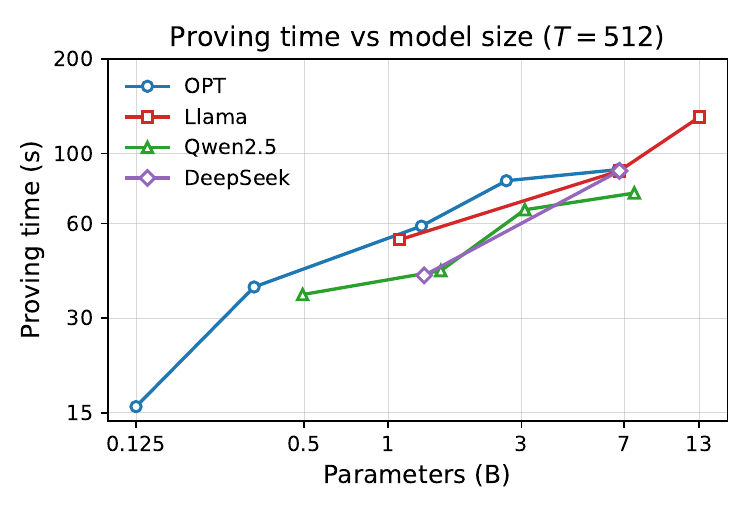}
\caption{Dense-weight commitment and covered proving cost against model size at $T{=}512$. Every linear shape was measured directly on the A100. Nonlinear argument costs are interpolated from measured size curves, and their opening-preparation costs are fitted from measured openings.}
\label{fig:scaling}
\end{figure}

\subsection{Loss Functions and Fine-Tuning}
The certified scalar is the masked next-token negative log-likelihood of~\eqref{eq:llm-nll} over the auditor's sequences, and the audit compares it with a public acceptance band.
Published domain-adaptation results illustrate the available separation: held-out loss falls from $1.32$ to $0.99$ on biomedical text and from $1.63$ to $1.34$ on computer science text, while the same adaptation raises loss on the original distribution from $1.19$ to $1.63$~\citep{gururangan2020dontstop,luo2023forgetting}.
These changes of roughly $0.3$ nats are much larger than the resolution of our fixed-point representation, although choosing an acceptance band for a new task still requires a task-specific calibration set.
A fairness gap, membership check, or safety score changes the public terminal objective and may select different auditor-chosen sequences, while the forward circuit and committed weights remain the same.
The output head accounts for between $0.1$ and $1.5\%$ of the proving time. Public post-processing adds little when it reuses the same forward evaluation, whereas an objective that requires a second evaluation also incurs its additional sequence workload.

\section{Conclusion}

Without a certificate bound to the released checkpoint, a trainer can advertise domain adaptation, instruction tuning, or a safety and fairness property that the model does not attain, or substitute a weaker checkpoint and still pass an evaluation on a known test set. The harm is an accepted false claim about the checkpoint.

We presented zkLLMPoT, a zero-knowledge framework for auditing outcomes attributed to LLM training without proving the full training trajectory. By committing the model before the auditor selects challenge sequences, zkLLMPoT reduces the certified computation to a forward evaluation under an auditor-specified objective, making the proof cost independent of the number of training iterations. Across four LLM families, the proving time is 41–59 seconds at the 1.xB scale and 131 seconds at 13B parameters, with verification below half a second. These results suggest a practical path toward succinct and privacy-preserving auditing of trained LLM checkpoints.

We discuss several future directions: (1) faster operator designs and tighter fusion that extend the same one-shot relation to larger models and longer sequences; (2) more flexible user-defined audits of training outcomes, beyond a fixed catalog of loss, fairness, membership, and safety objectives, while still compiling to the same committed forward circuit; and (3) protocol designs that admit multiple trainers, auditors, or downstream verifiers when several parties jointly hold a checkpoint or jointly set the challenge, without revealing $\theta$ or the private corpus.

\section*{AI use statement}

We used generative AI tools to polish the writing, to help check experimental outputs, and to organize recorded measurements into tables and figures.
We did not use them to generate research ideas, to design the protocol or its gadgets, to formulate or prove claims, or to produce any reported number: every measurement comes from running our own code on hardware we control.
All AI-assisted edits were reviewed by the authors before inclusion, and the use of these tools remained under our direction.
We take responsibility for the final content of this work, including text, claims, and artifacts produced with the aid of generative AI.

\section*{Ethics statement}

This work builds an auditing mechanism, and its purpose is to let a trainer substantiate a claim about a released checkpoint without disclosing the weights or the data the checkpoint was trained on. It does not involve human subjects, and it releases no new dataset. The models and corpora we evaluate on are public releases used under their own licenses. We note that a certificate of the kind zkLLMPoT produces attests to the value of a stated objective on challenge sequences the auditor chooses; it is evidence about a specific, public property, and should not be read as a broader guarantee about a model's safety, provenance, or fitness for a particular use.

\section*{Reproducibility statement}

We will open-source our implementation after the review period, including the protocol code and the scripts needed to reproduce the reported results.

\bibliography{ref}
\bibliographystyle{style2027/iclr2027_conference}

\appendix
\section{Related Work}
\label{sec:related}
\textbf{Zero-knowledge proofs of inference.}
A large body of work certifies that a committed model produced a claimed output, without revealing weights.
The threat these systems target is \emph{inference-time substitution}: an untrusted service may silently serve a cheaper or weaker model than the one it advertised, i.e., ``dumb down'' the deployed model~\citep{chen2024zkml,sun2024zkllm,qu2025zkgpt}.
Early systems target CNNs and generic neural predictors, compiling matrix multiplies and nonlinearities into SNARK-friendly constraints~\citep{liu2021zkcnn,weng2021mystique,hao2024scalable,chen2024zkml}.
More recently, zkLLM and zkGPT scale this idea to Transformer inference, with lookup-style or GKR-style gadgets for softmax, GELU, and attention~\citep{sun2024zkllm,qu2025zkgpt}.
They therefore police serving integrity by checking that a deployed forward pass matches committed architecture and weights, rather than checking how those weights were obtained.

\textbf{Zero-knowledge proofs of training.}
Proofs of training (zkPoT) aim to show that a released checkpoint results from running a declared learning algorithm on committed data.
Garg et al.\ give the first practical zkPoT protocols, but only for logistic regression~\citep{garg2023zkpot}.
Kaizen proves full gradient-descent iterations with backpropagation for deep networks, at a cost of about 15 minutes per step already on VGG-11~\citep{abbaszadeh2024kaizen}.
Confidential-DPproof additionally certifies that training satisfied a claimed DP-SGD guarantee, and reports on the order of 100 hours even on CIFAR-10~\citep{shamsabadi2024confidential}.
All of these systems encode the backward pass and optimizer update into the constraint system.
Under the unified LLM training model of~\secref{sec:prelim-training}, that circuit grows with both model width and the number of steps, which is the reason naive zkPoT is unrealistic for OPT-, Llama-, Qwen2.5-, or DeepSeek-Coder-scale runs.

\textbf{Optimistic and third-party audits.}
A complementary line avoids SNARKs by letting an auditor re-execute training, or by making hardware training deterministic enough to replay~\citep{srivastava2024optimistic,waiwitlikhit2024zkaudit}.
ZKAudit shows that an external party can check training properties (including membership and fairness-style queries) if it may see or re-run the learning process~\citep{waiwitlikhit2024zkaudit}.
These protocols achieve strong semantic coverage, but they either expose proprietary weights and data or require a replica of the original training compute.
They therefore make a different choice in the trade-off between efficiency and privacy that zk-SNARKs are designed to address~(\secref{sec:prelim-zksnark}).

zkLLMPoT instead certifies an auditor-chosen evaluation of committed weights from the decoder-only family $\mathcal{A}$.
The relation excludes the backward pass and $\mathsf{Opt}$, so its cost does not grow with the number of training steps, and the proof reveals neither $\theta$ nor the private corpus.
This lets an auditor check properties of the resulting checkpoint at Transformer scale without replaying training.

\section{Details of the Zero-Knowledge Protocol}
\label{app:zk}

The whole construction rests on one reduction. A claim about a tensor with $2^n$ entries is turned into a claim about a single field element, and every operator of the forward pass is arranged so that verifying it becomes an instance of the multilinear sumcheck protocol. This appendix builds that machinery in order: the encoding that makes it possible (\secref{app:zk-mle}), the sumcheck protocol itself (\secref{app:zk-sumcheck}), the commitment scheme that settles the claims sumcheck leaves behind (\secref{app:zk-pcs}), the principle by which the forward pass is decomposed into instances of the two (\secref{app:zk-reduce}), and the gadget for each operator (\secref{app:zk-ops}).

\subsection{Multilinear extensions}
\label{app:zk-mle}

\textbf{Tensors as tables.}
Every tensor in~\eqref{eq:llm-forward}--\eqref{eq:llm-nll} is a real array, while the argument system works over $\mathbb{F}_p$ with $p$ the scalar field order of BLS12-381.
We fix a public scale $s$ and encode by signed fixed point,
\begin{equation}
\label{eq:app-quant}
  \mathrm{enc}_s(x) \;=\; \bigl\lfloor s\,x \bigr\rceil \bmod p ,
\end{equation}
a negative value being represented by its additive inverse.
A tensor with $N$ entries is flattened, padded to length $2^n$ with $n=\lceil\log_2 N\rceil$, and read as a table
\begin{equation}
\label{eq:app-table}
  T\colon\{0,1\}^{n}\to\mathbb{F}_p ,
  \qquad
  T(b_1,\dots,b_n) \;=\; \mathrm{enc}_s\bigl(\text{entry }\textstyle\sum_k b_k 2^{k-1}\bigr).
\end{equation}

\textbf{The extension.}
A function on the hypercube has a unique extension to $\mathbb{F}_p^n$ of degree at most one in each variable. With
\begin{equation}
\label{eq:app-eq}
  \widetilde{\mathrm{eq}}(b,X) \;=\; \prod_{k=1}^{n}\bigl(b_kX_k+(1-b_k)(1-X_k)\bigr),
\end{equation}
which is $1$ at $X=b$ and $0$ elsewhere on $\{0,1\}^n$, that extension is
\begin{equation}
\label{eq:app-mle}
  \widetilde{T}(X_1,\dots,X_n) \;=\; \sum_{b\in\{0,1\}^{n}} T(b)\,\widetilde{\mathrm{eq}}(b,X).
\end{equation}
We write $T(u)$ for $\widetilde{T}(u)$ throughout.

\textbf{Claims as evaluations.}
Two tables are equal iff their extensions agree at a uniformly random $u\in\mathbb{F}_p^{n}$, up to probability $n/|\mathbb{F}_p|$ by the Schwartz--Zippel lemma.
A statement about $2^n$ entries therefore becomes a statement about one field element, and the entries themselves never have to be sent.
This is the property the rest of the appendix uses: every claim the verifier holds has the form $T(u)=y$ for a committed table $T$, and every gadget's job is to trade such a claim for claims of the same form about the tables one step earlier.

\subsection{The sumcheck protocol}
\label{app:zk-sumcheck}

\textbf{Statement.}
Let $g\colon\mathbb{F}_p^{n}\to\mathbb{F}_p$ be a polynomial that the verifier can evaluate at any single point, and let the prover claim
\begin{equation}
\label{eq:app-sumcheck}
  \sum_{b\in\{0,1\}^{n}} g(b) \;=\; c .
\end{equation}
Computing the left side directly costs $2^n$ evaluations. Sumcheck lets the verifier accept or reject it in $O(n)$ work plus one evaluation of $g$.

\textbf{Protocol.}
\Algref{alg:sumcheck} states the interaction. Each round strips one variable: the prover sends the univariate obtained by summing $g$ over the variables it has not yet fixed, and the verifier replaces its claim by an evaluation of that univariate at a fresh random point.

\begin{algorithm}[t]
\caption{Sumcheck for $\sum_{b\in\{0,1\}^{n}}g(b)=c$}
\label{alg:sumcheck}
\small
\begin{algorithmic}[1]
\Require A polynomial $g\colon\mathbb{F}_p^{n}\to\mathbb{F}_p$ that the verifier can evaluate at a single point; a claimed sum $c$.
\Ensure The verifier accepts or rejects.
\State $c_0\gets c$
\For{$k=1,\dots,n$}
    \State the prover sends $g_k(X)=\sum_{b\in\{0,1\}^{n-k}}g(v_1,\dots,v_{k-1},X,b)$, as its evaluations at $0,1,\dots,\deg g_k$
    \State the verifier rejects unless $g_k(0)+g_k(1)=c_{k-1}$
    \State the verifier samples $v_k\leftarrow\mathbb{F}_p$ and sends it; both parties set $c_k\gets g_k(v_k)$
\EndFor
\State the verifier accepts iff $g(v_1,\dots,v_n)=c_n$
\end{algorithmic}
\end{algorithm}

\textbf{Completeness and soundness.}
Completeness is immediate: an honest prover's $g_k$ satisfies $g_k(0)+g_k(1)=c_{k-1}$ by construction, since splitting the $k$-th variable partitions the remaining sum.
For soundness, if the claim is false then $g_1$ must differ from the honest round polynomial, and two distinct univariates of degree at most $\mu$ agree at no more than $\mu$ points, so $c_1$ is again a false claim except with probability $\mu/|\mathbb{F}_p|$; the argument repeats down the recursion. A union bound over $n$ rounds gives soundness error at most $\mu n/|\mathbb{F}_p|$.

\textbf{Degree and cost.}
In every gadget below, $g$ is a product of $\mu$ multilinear factors, possibly times $\widetilde{\mathrm{eq}}$, so $\deg g_k\le\mu$ and each round polynomial is $\mu+1$ field elements.
The prover folds its tables once per round, halving them, so its total work is $O(\mu\,2^{n})$ field operations; the transcript is $n(\mu+1)$ field elements and the verifier does $O(\mu n)$ work.
The largest $\mu$ anywhere in zkLLMPoT is the softmax digit product of~\secref{app:zk-ops}.

\textbf{Leftover claims.}
The final check needs $g(v_1,\dots,v_n)$, and $g$ is built from the witness tables. The verifier cannot evaluate it unaided, so the protocol ends by handing back claims of the form $T(v)=y$. Those are settled by the commitment scheme of~\secref{app:zk-pcs}. Sumcheck reduces the size of a claim; it never discharges one.

\subsection{The polynomial commitment scheme}
\label{app:zk-pcs}

\begin{definition}[PCS]
\label{def:pcs}
A polynomial commitment scheme for multilinear polynomials over $\mathbb{F}_p^{n}$ is a tuple $(\mathsf{Setup},\mathsf{Commit},\mathsf{Open},\mathsf{Check})$: $\mathsf{Setup}(1^\lambda,n)$ outputs public parameters, $\mathsf{Commit}(\mathsf{pp},T)\to[[T]]$, and $\mathsf{Open}(\mathsf{pp},T,u)\to(y,\pi)$ produces an evaluation with a proof accepted by $\mathsf{Check}(\mathsf{pp},[[T]],u,y,\pi)$.
It is \emph{binding} if no efficient adversary produces $[[T]]$ with accepting proofs for $y\neq y'$ at one $u$, and \emph{hiding} if the commitment and openings reveal nothing beyond the asserted evaluations.
\end{definition}

\textbf{Construction.}
zkLLMPoT instantiates Definition~\ref{def:pcs} with Pedersen commitments \citep{pedersen1991noninteractive} in the matrix layout of Hyrax \citep{wahby2018doubly}.
Split $n=n_1+n_2$ and read the table as a $2^{n_1}\times2^{n_2}$ matrix $M$ with $M[i][j]=T(i\Vert j)$.
$\mathsf{Setup}$ samples generators $g_1,\dots,g_{2^{n_2}}\in\mathbb{G}$ with unknown relative discrete logarithms, and
\begin{equation}
\label{eq:app-commit}
  \mathsf{Commit}(T) \;=\; \bigl(C_1,\dots,C_{2^{n_1}}\bigr),
  \qquad
  C_i \;=\; \sum_{j=1}^{2^{n_2}} M[i][j]\;g_j ,
\end{equation}
one group element per row. Binding follows from discrete log: a second opening of some $C_i$ yields a nontrivial relation among the $g_j$.

\textbf{Opening.}
Because~\eqref{eq:app-mle} factors across the variable split, writing $u=(u^{(1)},u^{(2)})$,
\begin{equation}
\label{eq:app-open}
\begin{aligned}
  T(u)
  &= \sum_{i}\widetilde{\mathrm{eq}}(i,u^{(1)})
     \sum_{j}M[i][j]\,\widetilde{\mathrm{eq}}(j,u^{(2)}) \\
  &= \bigl\langle \mathbf{m},\,\widetilde{\mathrm{eq}}(\cdot,u^{(2)})\bigr\rangle , \\
  \mathbf{m}
  &= \sum_i \widetilde{\mathrm{eq}}(i,u^{(1)})\,M[i][\cdot] .
\end{aligned}
\end{equation}
The prover sends the folded row $\mathbf{m}$; the verifier applies the same fold to the commitments, $\sum_i\widetilde{\mathrm{eq}}(i,u^{(1)})C_i$, using the homomorphism of~\eqref{eq:app-commit}, checks that $\mathbf{m}$ commits to it, and then evaluates the inner product itself.
\textbf{Adding hiding.}
The scheme is binding but not hiding: $C_i$ is a deterministic function of row $i$, so anyone who guesses a row can confirm the guess, and the folded $\mathbf{m}$ is $2^{n_2}$ field elements of the witness sent in the clear.
Both are repaired by randomness. $\mathsf{Setup}$ samples one further generator $h$, again with unknown discrete logarithm relative to the $g_j$, and each row is blinded,
\begin{equation}
\label{eq:app-commit-hiding}
  C_i \;=\; \sum_{j=1}^{2^{n_2}} M[i][j]\,g_j \;+\; \rho_i\,h ,
  \qquad
  \rho_i \leftarrow \mathbb{F}_p .
\end{equation}
Since $\rho_i$ is uniform and $h$ generates the group, $\rho_i h$ is uniform in $\mathbb{G}$ and independent of the row, so $C_i$ is uniformly distributed whatever $M[i][\cdot]$ contains: the commitment carries no information about the table at all, and binding is unaffected because a second opening still yields a nontrivial relation among $g_1,\dots,g_{2^{n_2}},h$.
The homomorphism survives the blinding --- folding by $\widetilde{\mathrm{eq}}(\cdot,u^{(1)})$ carries the blinders to $\rho=\sum_i\widetilde{\mathrm{eq}}(i,u^{(1)})\rho_i$ --- so the verifier still forms the folded commitment of~\eqref{eq:app-open}, now a commitment to $\mathbf{m}$ under blinder $\rho$.
The prover then proves $\langle\mathbf{m},\widetilde{\mathrm{eq}}(\cdot,u^{(2)})\rangle=y$ against that commitment with an inner-product argument rather than sending $\mathbf{m}$, which also reduces the opening from $2^{n_2}$ field elements to $O(n_2)$ group elements.

\textbf{Costs.}
Committing is one multi-scalar multiplication over $2^{n}$ scalars, and is the only part of the protocol whose cost is set by the model rather than by the challenge.
Taking $n_1\approx n_2$ balances the opening: the inner-product proof carries $O(n_2)$ group elements, while the verifier performs $O(2^{n_1}+2^{n_2})$ group operations to fold the row commitments and verify the opening.
Padding is free here in a stronger sense than in~\secref{app:zk-mle}: the commitment is a homomorphic sum, so a padded zero contributes the identity and its term can be skipped outright.

\subsection{Reducing the forward pass to sumchecks and openings}
\label{app:zk-reduce}

The circuit $C_{\mathcal{A},\mathcal{L}}$ is not one large constraint system.
It is the operator graph of~\eqref{eq:llm-forward}--\eqref{eq:llm-block}, one node per operator, and each node carries a gadget that writes its input--output relation in the form of~\eqref{eq:app-sumcheck}.

Proving traverses that graph backwards, one operator at a time.
Suppose the auditor holds a claim $h^{(\ell)}(u_\ell)=y_\ell$ about the output of layer $\ell$.
The gadget for that layer answers it with a sumcheck, whose leftover evaluation splits in two: a claim about the layer's input, and a claim about its weights,
\begin{equation}
\label{eq:app-chain}
  \bigl(u_\ell,\,y_\ell\bigr)
  \;\xrightarrow{\ \text{sumcheck}\ }\;
  \bigl(u_{\ell-1},\,y_{\ell-1}\bigr)
  \;\text{ and }\;
  \theta_\ell(w_\ell)=c_\ell .
\end{equation}
The weight claim is set aside for an opening.
The input claim is an output claim for the layer below, so the same step applies again.
Consecutive gadgets are joined by an equality test: the auditor checks that the pair $(u_{\ell-1},y_{\ell-1})$ emitted at layer $\ell$ is exactly the pair it verifies at layer $\ell-1$ (Step~6 of~\Algref{alg:main}).
These tests are what make the gadgets one proof of the forward pass rather than $L$ unrelated proofs.
After $L$ steps no operator is left, and every claim still open is an evaluation of a table: claims about $S$ are checked on the public input, and claims about committed tables are discharged by~\secref{app:zk-pcs}. Sumcheck shrinks a claim about a whole tensor to a claim about one point, and only the PCS can settle a committed point.

\subsection{Operator gadgets}
\label{app:zk-ops}

Each gadget below states its relation and the sumcheck instance it becomes.
$B$ is the number of rows (batch times sequence), and $u_B,u_I,u_O$ are challenge vectors of the matching lengths.

\textbf{Linear layers and attention products.}
For $Y=XW$ with $X\in\mathbb{F}_p^{B\times I}$ and $W\in\mathbb{F}_p^{I\times O}$, the extensions satisfy
\begin{equation}
\label{eq:app-matmul}
  Y(u_B,u_O) \;=\; \sum_{b\in\{0,1\}^{\log I}} X(u_B,b)\;W(b,u_O),
\end{equation}
an instance of~\eqref{eq:app-sumcheck} with $n=\log I$ and $\mu=2$.
The prover first folds $X$ over $u_B$ and $W$ over $u_O$ at cost $O(BI+IO)$, then runs the sumcheck over the contracted dimension alone, which is why widening $O$ costs far less than widening $I$.
The claims left over are $X(u_B,u_I)$ and $W(u_I,u_O)$.
One gadget serves the projections, the FFN matrices, the output head, and both attention products $QK^{\top}$ and $PV$.

\textbf{Residual connections.}
Addition is linear in the extension, $(A+B)(u)=A(u)+B(u)$, so a residual is not a sumcheck instance: the verifier splits the claim and passes both halves back unchanged.

\textbf{Elementwise nonlinearities.}
GELU, SiLU and ReLU are enforced by a lookup against a public two-column table, the quantized input range $T_{\mathrm{in}}$ and its image $T_{\mathrm{out}}$.
Folding the columns with a random $r$ turns the pair condition into single-column membership,
\begin{equation}
\label{eq:app-fold}
  S_{\mathrm{com}} = S_{\mathrm{in}}+r\,S_{\mathrm{out}},
  \qquad
  T_{\mathrm{com}} = T_{\mathrm{in}}+r\,T_{\mathrm{out}} ,
\end{equation}
and membership becomes a sum by the logarithmic-derivative identity: for random $\beta$,
\begin{equation}
\label{eq:app-logup}
  \sum_{i=1}^{D}\frac{1}{\beta+S_{\mathrm{com}}[i]}
  \;=\;
  \sum_{j=1}^{N}\frac{m[j]}{\beta+T_{\mathrm{com}}[j]} ,
\end{equation}
with $m[j]$ the multiplicity of the $j$-th table entry.
Both sides are instances of~\eqref{eq:app-sumcheck}, run in two phases over $\log(D/N)$ and $\log N$ variables, which is why $D$ must be a multiple of $N$ and why the arguments are batched across the network.
The table is public and the verifier recomputes $T_{\mathrm{com}}$ itself: a prover free to choose the table would be free to choose the activation.

\textbf{Normalization.}
LayerNorm and RMSNorm share one gadget with three parts.
The row statistic is a weighted inner product, an instance of~\eqref{eq:app-sumcheck} with $\mu=3$ against the $\widetilde{\mathrm{eq}}$ weights,
\begin{equation}
\label{eq:app-sumsq}
  \sigma(u_B) \;=\; \sum_{r,i}\widetilde{\mathrm{eq}}(r,u_B)\;\bar{x}[r][i]^{2},
  \qquad
  \bar{x} = x \;\text{ or }\; x-\mathrm{mean}(x),
\end{equation}
where LayerNorm's mean is a linear functional of $x$ and so costs one extra evaluation rather than a sumcheck.
Bringing $\sigma$ into the table's range is a proved shift whose remainder is range-checked by~\eqref{eq:app-logup}, and the normalizer is then a range-mapped lookup of $1/\sqrt{\cdot}$ --- one entry per row, not per element.
The output is the broadcast product $y[r][i]=\bar{x}[r][i]\,\iota[r]$, a second sumcheck of the same shape as~\eqref{eq:app-sumsq}.

\textbf{Softmax.}
Exponentiation is decomposed rather than tabulated.
Each row of the score matrix is shifted so its entries are non-positive, and the shifted value is written in mixed radix, $\hat{x}=\sum_{k=1}^{K}B_k d_k$ with $B_k=\prod_{k'<k}b_{k'}$.
Digit $k$ carries its own public table
\begin{equation}
\label{eq:app-softmax-table}
  T_k[d] \;=\; \Bigl\lfloor \theta_k\exp\!\bigl(-B_k\,d/(s\sqrt{d_h})\bigr) \Bigr\rceil ,
\end{equation}
so that the digit images multiply back to the exponential,
\begin{equation}
\label{eq:app-softmax-prod}
  y \;=\; \prod_{k=L+1}^{K} T_k[d_k] .
\end{equation}
The low digits fall below the output precision and are range-checked only; the top digit saturates.
Each digit is a lookup~\eqref{eq:app-logup}, and~\eqref{eq:app-softmax-prod} is a Hadamard product whose sumcheck has $\mu=K-L+1$, the largest degree in the protocol and the source of the $d=4$ used in~\secref{app:zk-soundness}.
All heads of a layer go into one argument, sharing the digit tables and the per-round cost.

\subsection{Soundness}
\label{app:zk-soundness}

A prover can cheat in three ways: a false sumcheck, a false lookup, or a second opening of one commitment. Three lemmas bound each in turn, and a union bound over the arguments the circuit contains turns them into a statement about the certificate. Throughout, $\varepsilon$ is the probability that the auditor accepts a false claim, over its own coins.

\begin{lemma}[Sumcheck, \citealp{lund1992algebraic}]
\label{lem:sumcheck}
Let $g\colon\mathbb{F}_p^{n}\to\mathbb{F}_p$ have degree at most $\mu$ in every variable, and suppose $\sum_{b\in\{0,1\}^{n}}g(b)\neq c$. Then for any prover strategy, the verifier of \Algref{alg:sumcheck} accepts with probability at most $\mu n/|\mathbb{F}_p|$.
\end{lemma}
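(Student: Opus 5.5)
The argument is the standard round-by-round induction, following the sketch already given in~\secref{app:zk-sumcheck}. For $k=1,\dots,n$ let $H_k(X)=\sum_{b\in\{0,1\}^{n-k}}g(v_1,\dots,v_{k-1},X,b)$ be the honest round polynomial. Each $H_k$ is a univariate of degree at most $\mu$, because $g$ has degree at most $\mu$ in each variable. Call round $k$ \emph{false} if the running claim $c_{k-1}$ differs from $H_k(0)+H_k(1)$. By hypothesis, round~$1$ is false, since $H_1(0)+H_1(1)=\sum_b g(b)\neq c=c_0$.

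The key step is to show that a false round is followed by a false round except with probability $\mu/|\mathbb{F}_p|$. Suppose round $k$ is false and the prover sends $g_k$. We may assume $g_k$ has degree at most $\mu$, since the verifier receives it as $\mu+1$ evaluations and can reject anything longer. If $g_k=H_k$, then $g_k(0)+g_k(1)=H_k(0)+H_k(1)\neq c_{k-1}$, so the verifier rejects. Otherwise $g_k-H_k$ is a nonzero polynomial of degree at most $\mu$, so it has at most $\mu$ roots. The challenge $v_k$ is uniform and independent of $g_k$, because it is sampled after $g_k$ is sent. Hence $\Pr[g_k(v_k)=H_k(v_k)]\le\mu/|\mathbb{F}_p|$. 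Outside this event, $c_k=g_k(v_k)\neq H_k(v_k)=H_{k+1}(0)+H_{k+1}(1)$, so round $k+1$ is false. For $k=n$, the same reasoning gives $c_n\neq H_n(v_n)=g(v_1,\dots,v_n)$, and the final check rejects.

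Let $E_k$ be the event that round $k$ is false and $g_k(v_k)=H_k(v_k)$. The verifier accepts only if some $E_k$ occurs, since otherwise falsity propagates to the final check. By the union bound, the acceptance probability is at most $\sum_k\Pr[E_k]\le n\mu/|\mathbb{F}_p|$. This bound holds for an arbitrary, even adaptive, prover, because each bound on $\Pr[E_k]$ conditions on the transcript up to $g_k$ and uses only the freshness of $v_k$.

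The main subtlety is the conditioning. The bound on $E_k$ must be taken conditioned on the entire history $v_1,\dots,v_{k-1},g_1,\dots,g_k$, which fixes both $H_k$ and $g_k$. It must not be read as an unconditional statement about a fixed polynomial. Once this is stated, the remaining degree bookkeeping is routine.
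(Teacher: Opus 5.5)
Your proof is the same argument the paper relies on. The paper cites the lemma to Lund et al.\ and gives only the sketch in \secref{app:zk-sumcheck}: a false claim forces $g_1$ to differ from the honest round polynomial, two distinct degree-$\mu$ univariates agree on at most $\mu$ points, the argument recurses, and a union bound runs over $n$ rounds. You write this out in full, including the degree bound the verifier must enforce and the conditioning on the history.

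There is one slip in the final accounting. As you define it, $E_k$ (``round $k$ is false and $g_k(v_k)=H_k(v_k)$'') does not exclude $g_k=H_k$. A prover who sends the honest polynomial in a false round therefore makes $E_k$ occur with probability $1$, so $\Pr[E_k]\le\mu/|\mathbb{F}_p|$ is false for that event. The fix is to add to $E_k$ the condition that the round-$k$ check $g_k(0)+g_k(1)=c_{k-1}$ passes, which forces $g_k\neq H_k$ in a false round. Acceptance is still contained in $\bigcup_k E_k$, and your case analysis then gives the per-round bound exactly.
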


\begin{lemma}[Commitment binding, \citealp{pedersen1991noninteractive}]
\label{lem:binding}
An adversary that outputs a commitment $(C_1,\dots,C_{2^{n_1}})$, a point $u$, and two accepting openings with $y\neq y'$ in time $t$ yields a nontrivial relation $\sum_j\gamma_j g_j+\gamma_h h=0$ in time $t+O(2^{n})$. The binding advantage of~\eqref{eq:app-commit-hiding} is therefore at most $\mathrm{Adv}^{\mathrm{DL}}_{\mathbb{G}}(t)+\varepsilon_{\mathrm{ipa}}$, with $\varepsilon_{\mathrm{ipa}}$ the soundness error of the inner-product argument.
\end{lemma}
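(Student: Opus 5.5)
The plan is to reduce a double opening to a nontrivial linear relation among the generators $g_1,\dots,g_{2^{n_2}},h$, and then reduce that relation to discrete log.

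\textbf{Step 1: both openings certify the same folded commitment.} Fix the adversary's output $(C_1,\dots,C_{2^{n_1}})$, the point $u=(u^{(1)},u^{(2)})$, and the two openings for $y\neq y'$. By~\eqref{eq:app-open}, in both executions $\mathsf{Check}$ computes the same element
\[
C_u=\sum_i\widetilde{\mathrm{eq}}(i,u^{(1)})\,C_i ,
\]
since it depends only on the commitment and on $u$. Each opening is then an inner-product argument that $C_u$ commits, under some blinder, to a vector $\mathbf{m}$ with $\langle\mathbf{m},\mathbf{e}\rangle$ equal to the claimed value, where $\mathbf{e}=\widetilde{\mathrm{eq}}(\cdot,u^{(2)})$.

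\textbf{Step 2: extract two different openings of $C_u$.} I would invoke the knowledge soundness of the inner-product argument, which is what $\varepsilon_{\mathrm{ipa}}$ is meant to quantify. Except with probability $\varepsilon_{\mathrm{ipa}}$ (charging the two transcripts together to this term), the extractor returns $(\mathbf{m},\rho)$ and $(\mathbf{m}',\rho')$ satisfying
\[
C_u=\sum_j m_j g_j+\rho h=\sum_j m'_j g_j+\rho' h,
\qquad
\langle\mathbf{m},\mathbf{e}\rangle=y,
\qquad
\langle\mathbf{m}',\mathbf{e}\rangle=y'.
\]
Because $y\neq y'$, the vectors must differ, $\mathbf{m}\neq\mathbf{m}'$. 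Subtracting the two representations gives
\[
\sum_j\gamma_j g_j+\gamma_h h=0,
\qquad
\gamma_j=m_j-m'_j,\quad \gamma_h=\rho-\rho',
\]
and this relation is nontrivial. Forming $C_u$ and the difference costs $O(2^{n_1}+2^{n_2})\subseteq O(2^n)$ group and field operations, on top of the adversary's time $t$ and the extractor's time.

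\textbf{Step 3: reduce the relation to discrete log.} Given a DL instance $(G,H=xG)$, the reduction programs $\mathsf{Setup}$ with $g_j=a_jG+b_jH$ and $h=a_0G+b_0H$ for uniform $a_j,b_j$. These generators are distributed exactly as in the real $\mathsf{Setup}$. It runs Steps 1--2 to obtain the relation, which reads
\[
\Bigl(\sum_j\gamma_ja_j+\gamma_ha_0\Bigr)G+\Bigl(\sum_j\gamma_jb_j+\gamma_hb_0\Bigr)H=0 .
\]
The $a$'s are information-theoretically hidden given the generators, so the coefficient of $H$ vanishes with probability only $1/p$. Otherwise the reduction solves for $x$ directly. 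This yields the stated bound: binding advantage at most $\mathrm{Adv}^{\mathrm{DL}}_{\mathbb{G}}(t)+\varepsilon_{\mathrm{ipa}}$, with the $1/p$ term absorbed into the DL advantage.

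\textbf{Main obstacle.} I expect the difficulty to be the accounting in Step 2. Extractors for Bulletproofs-style inner-product arguments typically rewind, which inflates the running time beyond $t+O(2^n)$ and loosens the probability. I would first try to define $\varepsilon_{\mathrm{ipa}}$ as the knowledge error of a witness-extended emulator, so that rewinding cost is folded into that term. Failing that, I would state the time bound as $t$ times the extractor overhead, plus $O(2^n)$.
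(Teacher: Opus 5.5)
Your proposal is correct and matches the paper's argument: the paper also notes that the fold $\sum_i\widetilde{\mathrm{eq}}(i,u^{(1)})C_i$ is public, so two openings at one $u$ land on the same group element and subtract to a nontrivial relation among $g_1,\dots,g_{2^{n_2}},h$; it leaves the discrete-log step to the Pedersen citation, and your Step~3 supplies it correctly. Your Step~2 caveat is legitimate and more careful than the paper's statement: the blinded commitment is perfectly hiding, so the opening argument must be used as an argument of knowledge, which means the claimed time $t+O(2^{n})$ tacitly assumes a straight-line extractor, and your fallback of charging the extractor's overhead to the running time is the accurate version.
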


\begin{lemma}[Lookup argument]
\label{lem:lookup}
Let $S_{\mathrm{in}},S_{\mathrm{out}}\in\mathbb{F}_p^{D}$ and $T_{\mathrm{in}},T_{\mathrm{out}}\in\mathbb{F}_p^{N}$, and suppose the witness rows $\{(S_{\mathrm{in}}[i],S_{\mathrm{out}}[i])\}_{i\le D}$ are \emph{not} all table rows. Then the argument of~\eqref{eq:app-fold}--\eqref{eq:app-logup} accepts with probability at most
\begin{equation}
\label{eq:app-lookup-eps}
  \varepsilon_{\mathrm{lk}}
  \;\le\;
  \frac{DN}{|\mathbb{F}_p|}
  \;+\;
  \frac{D+N}{|\mathbb{F}_p|}
  \;+\;
  \frac{4\log D+4\log N}{|\mathbb{F}_p|} ,
\end{equation}
over the folding challenge $r$, the shift $\beta$, and the sumcheck challenges.
\end{lemma}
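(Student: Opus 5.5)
The plan is to bound the acceptance probability in two stages: first the folding of the two columns into one via \eqref{eq:app-fold}, then the logarithmic-derivative identity \eqref{eq:app-logup} checked by two sumchecks. Each stage contributes one term of \eqref{eq:app-lookup-eps}, and a union bound finishes the proof.

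\textbf{The folding challenge $r$.} Suppose some witness row $(S_{\mathrm{in}}[i],S_{\mathrm{out}}[i])$ is not a table row. For each table index $j$, the value $S_{\mathrm{in}}[i]+rS_{\mathrm{out}}[i]-T_{\mathrm{in}}[j]-rT_{\mathrm{out}}[j]$ is a nonzero polynomial of degree at most one in $r$. It therefore vanishes for at most one $r$. Taking a union bound over all $D$ witness rows and $N$ table rows, the probability that folding makes a non-member collide with some table entry is at most $DN/|\mathbb{F}_p|$. Outside this event, the multiset $\{S_{\mathrm{com}}[i]\}$ contains an element not in the set $\{T_{\mathrm{com}}[j]\}$.

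\textbf{The shift $\beta$.} Condition on the good event for $r$. Consider the rational function
\[
  F(X)=\sum_i \frac{1}{X+S_{\mathrm{com}}[i]}-\sum_j \frac{m[j]}{X+T_{\mathrm{com}}[j]} .
\]
Its pole at $-S_{\mathrm{com}}[i^\ast]$ for the non-member cannot cancel. This uses that the multiplicity of that pole is at most $D<p$, so it is nonzero in $\mathbb{F}_p$. Hence $F$ is not identically zero. Clearing denominators gives a nonzero polynomial of degree at most $D+N-1$, so a random $\beta$ is a root with probability at most $(D+N)/|\mathbb{F}_p|$. The same term also absorbs the event that $\beta$ hits a pole, where some denominator is zero. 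This gives the second term.

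\textbf{The sumchecks.} Given $r$ and $\beta$, the identity \eqref{eq:app-logup} is false. The argument verifies it through sumcheck instances in which the prover commits to inverse tables $a[i]=1/(\beta+S_{\mathrm{com}}[i])$ and $b[j]=m[j]/(\beta+T_{\mathrm{com}}[j])$. Two kinds of instance are needed:
\begin{itemize}
\item well-formedness instances, which are zero-checks of $a\cdot(\beta+S_{\mathrm{com}})-1$ against $\widetilde{\mathrm{eq}}$, and similarly for $b$;
\item sum instances, which check $\sum a$ and $\sum b$.
\end{itemize}
These are organised in the two phases over $\log D$ and $\log N$ variables described after \eqref{eq:app-logup}.

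The zero-checks have degree at most $3$ per variable, and a sum instance has degree at most $1$. Accounting for the random-point reduction of each zero-check, each phase contributes at most $4\log D$ (respectively $4\log N$) over $|\mathbb{F}_p|$. This follows from Lemma~\ref{lem:sumcheck} with $\mu\le 3$ plus one Schwartz--Zippel term per variable for the $\widetilde{\mathrm{eq}}$ reduction. Leftover claims on $T_{\mathrm{com}}$ are evaluated by the verifier on the public table. Claims on committed tables are deferred to the PCS, and their error is charged separately to $\varepsilon_{\mathrm{ipa}}$ and binding rather than to this lemma.

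\textbf{Main obstacle.} I expect the hardest step to be the careful accounting in the sumcheck stage: making sure the per-variable degree and the zero-check randomisation together give exactly the constant $4$. The two-phase structure with $D$ a multiple of $N$ must also be shown not to introduce an extra round term. I would first fix the precise polynomial in each phase, bound its degree, and then apply Lemma~\ref{lem:sumcheck} round by round.
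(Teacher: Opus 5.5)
Your proposal is correct in substance and follows the paper's own one-line accounting term by term: a union bound over the $DN$ witness--table pairs for a collision under the single folding challenge $r$, Schwartz--Zippel on the cleared logarithmic-derivative identity for $\beta$, and the rounds of the two auxiliary sumchecks, with opening errors charged to the theorem rather than to the lemma. Two steps need tightening. First, the pole event for $\beta$ fits inside the $(D+N)/|\mathbb{F}_p|$ term only if you clear by the distinct denominators: an exploitable table pole has zero residue and is then a root of that polynomial. Second, the sums of $a$ and $b$ must be discharged without separate degree-$1$ sumchecks (e.g.\ by an opening at $(1/2,\dots,1/2)$), otherwise your count gives the constant $5$ rather than $4$.
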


Lemma~\ref{lem:sumcheck} is notable for what it omits: only the round degree and the round count enter, never the size of the table, so security is nearly independent of model size and the degree of the softmax gadget is the one structural choice worth watching. Lemma~\ref{lem:binding} is Pedersen binding; what the matrix layout of~\secref{app:zk-pcs} adds is that the fold $C(u^{(1)})=\sum_i\widetilde{\mathrm{eq}}(i,u^{(1)})C_i$ is public, so two openings at one $u$ land on the same group element and subtract to a relation among the generators. It turns ``the prover asserted $T(u)=y$'' into ``$T(u)=y$ for the table fixed at commitment time'', and it is the only one of the three bounds that is computational rather than statistical. Lemma~\ref{lem:lookup} has one term per challenge: $DN$ for a folded collision, $D+N$ for the logarithmic-derivative identity \citep{habock2022logup}, and the rounds of the two auxiliary sumchecks. The $DN$ term dominates and is the price of folding with one global $r$ rather than one challenge per row --- below $2^{-220}$ at our sizes, so the cheap version is the right one.

\begin{theorem}[Soundness of \Algref{alg:main}]
\label{thm:soundness}
Let $\theta$ be the checkpoint committed in Step~2 and $S$ the sequences the auditor draws afterwards. A prover that makes the auditor accept a value $v\neq\mathcal{L}(f_{\mathcal{A},\theta}(S))$ succeeds with probability at most
\begin{equation}
\label{eq:app-total-eps}
  \varepsilon \;\le\; \sum_{a\in\mathcal{C}_{\mathcal{A},\mathcal{L}}}\varepsilon_a
  \;+\; \varepsilon_{\mathrm{ipa}}^{\mathrm{tot}}
  \;+\; \mathrm{Adv}^{\mathrm{DL}}_{\mathbb{G}}(t) ,
\end{equation}
where $\varepsilon_a$ is the sumcheck error of a linear gadget or the lookup error of a nonlinear gadget, and $\varepsilon_{\mathrm{ipa}}^{\mathrm{tot}}$ is the aggregate soundness error of all inner-product opening arguments.
\end{theorem}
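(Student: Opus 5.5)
The proof is a hybrid argument over the backward traversal of the operator graph, followed by a union bound. Fix a cheating prover $\mathcal{P}^*$ that makes the auditor accept some $v\neq\mathcal{L}(f_{\mathcal{A},\theta}(S))$. Here $\theta$ is the table bound by $[[\theta]]$ in Step~2.

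\textbf{Binding.} First I make the committed tables well defined. By Lemma~\ref{lem:binding}, except with probability $\mathrm{Adv}^{\mathrm{DL}}_{\mathbb{G}}(t)+\varepsilon_{\mathrm{ipa}}^{\mathrm{tot}}$, every opening the auditor accepts in Step~7 equals the multilinear extension of a single table fixed at commitment time. This holds both for $[[\theta]]$ and for the activation commitments $[[H]]$, $[[v]]$.
\begin{itemize}
\item The DL term is charged once, not once per commitment. A reduction that guesses which opening collides, or better, embeds the DL instance in all generators $g_1,\dots,g_{2^{n_2}},h$ simultaneously, extracts a nontrivial relation from whichever collision occurs.
\item The per-opening inner-product errors are summed into $\varepsilon_{\mathrm{ipa}}^{\mathrm{tot}}$.
\item Since $[[\theta]]$ is sent before $S$ is sampled, the extracted $\theta$ is independent of $S$. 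The weights the auditor's checks refer to are therefore exactly the $\theta$ of the theorem.
\end{itemize}

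\textbf{Backward induction.} Conditioned on no binding break, I define for each node $a$ of $C_{\mathcal{A},\mathcal{L}}$ the honest table $T_a^{\star}$, obtained by evaluating the forward pass on $(\theta,S)$. Call a claim $T(u)=y$ \emph{false} if $y\neq T^{\star}(u)$.

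The initial claim is the output claim on $v$. It is false because $v\neq\mathcal{L}(f_{\mathcal{A},\theta}(S))$.

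The inductive step is: if the output claim of gadget $a$ is false, then except with probability $\varepsilon_a$, at least one claim it emits (input or weight) is false. I would check this per gadget class.
\begin{itemize}
\item \textbf{Linear gadgets.} Apply Lemma~\ref{lem:sumcheck} to \eqref{eq:app-matmul} and \eqref{eq:app-sumsq}. A false sum leads either to rejection or to a false leftover evaluation, with error $\mu n/|\mathbb{F}_p|$.
\item \textbf{Residuals.} These are exact, by linearity of the extension.
\item \textbf{Lookup gadgets.} Apply Lemma~\ref{lem:lookup}. Its hypothesis applies because the verifier recomputes $T_{\mathrm{com}}$ itself, so the table is the public one.
\item \textbf{Softmax.} The digit-product sumcheck is covered by Lemma~\ref{lem:sumcheck} with $\mu=K-L+1$.
\end{itemize}
The chaining check of Step~6 ensures that a false input claim at layer $\ell$ is literally the output claim verified at layer $\ell-1$, so falsity propagates down the chain.

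\textbf{Termination.} At the leaves, every remaining false claim is one of two kinds:
\begin{itemize}
\item a claim about the public table $S$, which the auditor evaluates directly and so rejects with certainty;
\item a claim about a committed table, which by binding cannot be opened to a false value.
\end{itemize}
Hence acceptance forces either some gadget failure event or a binding break. A union bound over $a\in\mathcal{C}_{\mathcal{A},\mathcal{L}}$ then gives \eqref{eq:app-total-eps}.

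\textbf{Main obstacle.} The hardest step is making the inductive invariant precise for the nonlinear gadgets. The issue is that the honest tables $T_a^{\star}$ are not multilinear images of their inputs. The lookup and softmax gadgets must be shown to enforce the \emph{fixed-point} relation defining $T_a^{\star}$, including the shift/remainder range checks in normalization and the digit decomposition in softmax, rather than merely some relation. My first approach is to define $T_a^{\star}$ as the quantized circuit's output, not the real-valued model's. Soundness is then stated for $C_{\mathcal{A},\mathcal{L}}$ itself, and every intermediate witness column (digits, remainders, multiplicities) is treated as a committed table covered by the binding step.

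A second subtlety is that the leftover claims of one gadget, for example $X(u_B,u_I)$, become the next gadget's challenge point. I would note that Lemma~\ref{lem:sumcheck} holds for arbitrary, even adversarially influenced, points, since its probability is only over the verifier's fresh coins in that sumcheck.
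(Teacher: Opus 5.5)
Your proposal is correct and follows essentially the paper's own argument. Both walk the operator graph in reverse topological order from the false top claim, charge $\varepsilon_a$ whenever a gadget accepts a false output claim, and use the Step~6 chaining check to hand the falsity to one determined predecessor; both then terminate either at the public $S$ (checked directly) or at a committed table (binding, costing $\mathrm{Adv}^{\mathrm{DL}}_{\mathbb{G}}(t)+\varepsilon_{\mathrm{ipa}}^{\mathrm{tot}}$), and union-bound over the gadgets. Your extra remarks make explicit what the paper's proof leaves implicit: charging the discrete-log term once by embedding the instance in all generators, taking the quantized circuit rather than the real-valued model as ground truth, and noting that each sumcheck's error is over its own fresh coins even at adversarially influenced points.
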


\begin{proof}
Acceptance is the conjunction of three things: every operator argument accepts, every chaining equality of~\eqref{eq:app-chain} holds, leftover claims on the public $S$ match the auditor's table, and every leftover claim on a committed table opens against a commitment published in Step~2. Order the operators in reverse topological order and walk the graph from the output. The auditor's top claim is false by assumption. At each operator, either its argument accepted a false claim --- an event bounded by $\varepsilon_a$ --- or it emitted a false claim on one of its inputs or on its weights. Because Step~6 checks that the emitted pair $(u_{\ell-1},y_{\ell-1})$ is \emph{syntactically} the pair the predecessor is asked to justify, the falsity is handed to one determined predecessor rather than dispersed. The recursion therefore terminates after at most $L$ steps at a claim about a committed table or about the public $S$; the former requires the event of Lemma~\ref{lem:binding}, and the latter is checked directly. Summing the per-argument errors along the path, adding the aggregate IPA opening error, and accounting for the discrete-log binding advantage gives~\eqref{eq:app-total-eps}.
\end{proof}

\textbf{Numerical soundness error.}
Over BLS12-381, $|\mathbb{F}_p|\approx2^{255}$. The largest round degree anywhere in zkLLMPoT is the softmax digit product of~\eqref{eq:app-softmax-prod}, $\mu=4$, and the longest argument is that same product over all heads of a layer, $2^{23}$ entries and so $n=23$ rounds; by Lemma~\ref{lem:sumcheck} no single sumcheck exceeds $92/|\mathbb{F}_p|<2^{-248}$. The lookups are looser but not by much: the widest is the activation table over $D=2^{23}$ elements against $N=2^{12}$ entries, giving $DN/|\mathbb{F}_p|<2^{-220}$ in~\eqref{eq:app-lookup-eps}. Certifying OPT-1.3B at $T{=}512$ runs $1926$ arguments, Llama-1.1B $1786$, DeepSeek-Coder-1.3B $1180$ and Qwen2.5-1.5B $1166$, so the statistical sum $\sum_a\varepsilon_a$ in~\eqref{eq:app-total-eps} stays below $2^{-209}$ for every configuration.

The statistical error is therefore negligible relative to the two computational terms in~\eqref{eq:app-total-eps}. If the aggregate IPA error is at most $2^{-128}$ and the discrete-log advantage on $\mathbb{G}_1$ is about $2^{-128}$, the additive bound is below $2^{-127}+2^{-209}$, corresponding to approximately $127$ bits of security under this conservative union bound.

\subsection{Zero knowledge}
\label{app:zk-hiding}

The auditor is entitled to $v$ and to nothing else: not $\theta$, not any intermediate activation, and not the data the checkpoint was trained on, which enters the transcript only through $\theta$ and so is hidden whenever $\theta$ is.
Zero knowledge in zkLLMPoT reduces to the hiding property of the commitment scheme.
Sumcheck never transmits witness values: a round polynomial is a claim about a sum, the verifier's reply is a random point, and the exchange ends by handing back an evaluation claim $T(u)=y$ (\secref{app:zk-sumcheck}). Each round polynomial is in addition masked to be uniform among everything consistent with the check the verifier is about to perform, at the cost of one extra committed polynomial per argument \citep{chiesa2017zksumcheck,xie2019libra}.
What travels outside the commitments is public and independent of $\theta$: the architecture $\mathcal{A}$, the objective $\mathcal{L}$, the sequences $S$, the quantization parameters, and every lookup table, all of which the auditor rebuilds itself. Committed are the weights, every activation, and every data-dependent constant, including the per-row softmax shift, which is proved in range rather than revealed.
Every table the auditor learns anything about, it therefore learns about through $\mathsf{Commit}$ and $\mathsf{Open}$, and it remains to discuss how these two hide.

\textbf{Blinding the commitment.}
Committing exposes the whole weight tensor to the auditor, so it is where the randomness goes first. A row of the matrix layout is committed with a blinding generator, $C_i=\sum_j M[i][j]g_j+\rho_i h$ as in~\eqref{eq:app-commit-hiding}, with $\rho_i$ drawn fresh and uniform per row. Because $\rho_i$ is uniform and $h$ generates the group, $\rho_i h$ is uniform in $\mathbb{G}$ and independent of the row it blinds, so $C_i$ is uniformly distributed whatever the row contains. This matters concretely: a quantized weight row has a small effective range, so without the blinder an auditor who suspects a particular row could recompute its commitment and confirm the guess.

\textbf{Blinding the opening.}
An opening has to convince the auditor of $T(u)=y$ without saying more than $y$, and two things make that possible. The blinding is homomorphic, so the verifier's public fold carries the row blinders to a single $\rho=\sum_i\widetilde{\mathrm{eq}}(i,u^{(1)})\rho_i$ and the folded object is again a hiding commitment, this time to the folded row $\mathbf{m}$. And $\mathbf{m}$ is never sent: the prover proves $\langle\mathbf{m},\widetilde{\mathrm{eq}}(\cdot,u^{(2)})\rangle=y$ against that commitment with an inner-product argument. Hiding then follows because $\rho$ is a uniform unknown entering exactly one of the verifier's equations, so every value the public claim could take admits a consistent $\rho$: the auditor's view can be reconstructed from the public $u$ and $y$ alone, and therefore says nothing about the table. Since the blinders are uniform field elements, this hiding is perfect rather than computational, and binding by Lemma~\ref{lem:binding} is the computational half of the pair.

\end{document}